\RequirePackage[2023-11-01]{latexrelease}
\documentclass{siamart171218}

\usepackage[T1]{fontenc}
\usepackage{amsfonts}
\usepackage{amssymb}
\usepackage{graphicx}
\usepackage{epstopdf}
\usepackage{booktabs}
\usepackage{multirow}
\usepackage{algorithm}
\usepackage{algorithmic}
\usepackage{bm}
\usepackage{mathtools}
\ifpdf
  \DeclareGraphicsExtensions{.eps,.pdf,.png,.jpg}
\else
  \DeclareGraphicsExtensions{.eps}
\fi

\newsiamthm{assumption}{Assumption}

\headers{MpSub: subspace trust-region derivative-free LLM fine-tuning}{Y.~Wang, H.~Yao, and P.~Xie}
\title{MpSub: A Momentum $p$-Dimensional Subspace Trust-Region Method for Derivative-Free Fine-Tuning of Large Language Models\thanks{Preprint.}}
\author{Yuyang Wang\thanks{Xi'an Jiaotong University, Xi'an, China (\email{vibrancy05@stu.xjtu.edu.cn}).}
\and Haoyu Yao\thanks{Xi'an Jiaotong University, Xi'an, China (\email{yaohaoyu@stu.xjtu.edu.cn}).}
\and Pengcheng Xie\thanks{Lawrence Berkeley National Laboratory, Berkeley, CA, USA (\email{pxie@lbl.gov}). Corresponding author.}}

\ifpdf
\hypersetup{
  pdftitle={MpSub: a momentum p-dimensional subspace trust-region method for derivative-free fine-tuning of large language models}
}
\fi

\newcommand{\bx}{\bm{x}}

\newcommand{\bd}{\bm{d}}
\newcommand{\bg}{\bm{g}}
\newcommand{\bz}{\bm{z}}
\newcommand{\bs}{\bm{s}}

\providecommand{\coloneqq}{\mathrel{\mathop:}=}
\begin{document}

\maketitle

\begin{abstract}
Full-parameter fine-tuning of large language models has a substantial memory cost because backpropagation requires storing activations and gradients. Zeroth-order optimization avoids that by estimating update directions from loss evaluations, but existing methods often depend on a learning rate that must be retuned for each model and task. We propose the momentum $p$-dimensional subspace trust-region method (\texttt{MpSub}). At each iteration, \texttt{MpSub} searches within a $p$-dimensional subspace: one direction preserves the historical information carried by the most recent accepted step, while the remaining directions promote exploration through fresh random sampling. The subspace gradient is estimated by central differences, a trial step is computed from a linear trust-region model, and the trust-region radius is updated according to the agreement between predicted and observed loss reduction. The radius controls both the finite-difference perturbation and the trust-region bound on the subspace step, so no learning rate is required. In the LLM setting, all evaluations within an iteration share one minibatch, and the directions are regenerated from stored seeds and applied in place to the model weights, so training uses forward evaluations alone. For smooth deterministic objectives under Gaussian direction sampling without orthogonalization, we bound the finite-difference error, quantify the gradient captured by the random subspace, and prove that $\|\nabla f(\bx_k)\|_2 \to 0$ almost surely under a safeguarded radius update. Under a matched budget of 8,400 training-objective forward passes per configuration, we fine-tune OPT-125M and OPT-350M on CommitmentBank. With the same preset parameters at both model sizes, \texttt{MpSub} attains mean test accuracies of 0.673 and 0.690 over three seeds, while the MeZO configurations selected by development accuracy attain 0.685 at both sizes. These results show that \texttt{MpSub} attains accuracy comparable to tuned MeZO without a learning-rate search.
\end{abstract}

\begin{keywords}
derivative-free optimization, subspace trust region, zeroth-order optimization, large language model fine-tuning
\end{keywords}

\begin{AMS}
90C56, 65K05, 90C26
\end{AMS}

\section{Introduction}
\label{sec:intro}

Zeroth-order (ZO) optimization has become an attractive alternative for full-parameter fine-tuning of large language models (LLMs) when backpropagation is limited by memory. Instead of storing activations for a backward pass, ZO methods estimate update directions from function evaluations. MeZO \cite{malladi2023mezo} applies the classical two-point random estimator \cite{nesterov2017random} to language models and regenerates random perturbations from seeds rather than storing them. Later methods have modified the perturbation structure, including random low-dimensional subspaces \cite{yu2024subzero} and low-rank gradient structures \cite{chen2024enhancing}.

These methods still use a prescribed learning rate to determine the update size. Its effective value can vary substantially across settings and may require a separate search. In addition, a single random direction contains only a small amount of directional information in a very high-dimensional parameter space. For a normalized isotropic direction, the typical magnitude of its projection onto a fixed gradient decreases as the dimension grows. This motivates using more than one search direction while avoiding a proportional increase in stored parameter-sized vectors.

Trust-region methods provide a natural way to control the step size without prescribing a learning rate. In derivative-free optimization, the trust-region radius is adjusted according to the agreement between a local model and the observed objective values \cite{audet2017derivative, conn2009introduction, conn2000trust, Conn2009a}. Direct model construction in the full parameter space is not practical for models with hundreds of millions of variables, which motivates restricting the search to a much smaller subspace \cite{cartis2023scalable, yuan2007subspace, yuan2014review}. Randomized subspaces have also been studied in derivative-free optimization \cite{cartis2024randomizedsubspacederivativefreeoptimization, kozak2021stochastic, Menickelly2023}.

The closest method to the present work is 2D-MoSub \cite{xie2023twodimensional}. It searches in a two-dimensional subspace formed by the previous accepted step and a new search direction. The previous step carries information from the optimization trajectory, while the new direction provides exploration. A quadratic interpolation model is constructed in this two-dimensional subspace and minimized within a trust region. This momentum-plus-exploration construction is also the starting point of our method. The difficulty in extending the same approach to a larger subspace lies in the quadratic model: a direct extension to dimension $p$ would require $\frac{1}{2}(p+1)(p+2)$ interpolation points. At $p=20$, this would require 231 points before a trial step is computed.

We therefore consider the LLM fine-tuning problem
\begin{equation}
\min_{\bx \in \mathbb{R}^n}
\ \mathbb{E}_{\xi\sim\mathcal D}
\bigl[\ell(\bx;\xi)\bigr],
\label{eq:ft_objective}
\end{equation}
and propose \texttt{MpSub}, a momentum $p$-dimensional subspace trust-region method. \texttt{MpSub} keeps the momentum-plus-exploration structure of 2D-MoSub but replaces quadratic interpolation with directional derivatives estimated by central differences. These estimates define a linear model in a $p$-dimensional subspace formed by the most recent accepted displacement and $p-1$ fresh Gaussian directions. The resulting iteration requires $2p+2$ function evaluations, so the evaluation cost grows linearly with the subspace dimension. For LLM fine-tuning, all evaluations within one iteration use the same minibatch, and the random directions are regenerated from seeds when needed rather than stored explicitly.

For smooth deterministic objectives, we analyze the Gaussian directions directly without assuming that they are orthogonal. We derive finite-difference error bounds, quantify the gradient information captured by the random subspace, and show that the gradient norm converges to zero almost surely under a safeguarded trust-region radius update. We then compare \texttt{MpSub} with MeZO on OPT-125M and OPT-350M under the same number of training-objective forward evaluations. The same trust-region settings are used for both model sizes, while the MeZO learning rate is selected separately from a grid.

\section{MpSub}
\label{section:MpSub}

We consider the unconstrained optimization problem
\begin{equation}
\min_{\bx \in \mathbb{R}^n}\ f(\bx),
\label{eq:opt_problem}
\end{equation}
in the regime where the parameter dimension $n$ is very large (e.g., $n \sim 10^8$ in full-parameter language model fine-tuning) and derivative information is inaccessible. In our numerical experiments (Section~\ref{sec:experiments}), $f$ represents the stochastic minibatch loss~\eqref{eq:ft_objective} resampled at each iteration, whereas in the theoretical analysis (Section~\ref{sec:analysis}), $f$ is treated as a deterministic, continuously differentiable objective.

At each iteration $k$, \texttt{MpSub} maintains the parameter iterate $\bx_k \in \mathbb{R}^n$, a trust-region radius $\Delta_k \in [\Delta_{\min}, \Delta_{\max}]$, and a momentum displacement $\bm m_k \in \mathbb{R}^n$ generated by the most recent accepted step. The radius $\Delta_k$ serves as both the finite-difference sampling scale and the trust-region bound, while $\bm m_k$ spans the historical exploration axis of the subspace. The overall procedure is summarized in Algorithm~\ref{alg:mpsub}.

\begin{algorithm}[htbp]
\caption{\texttt{MpSub}: momentum $p$-dimensional subspace trust-region method}
\label{alg:mpsub}
\begin{algorithmic}[1]
\STATE{\textbf{Input.} Initial iterate $\bx_0 \in \mathbb{R}^n$, subspace dimension $p$, initial radius $\Delta_0 \in [\Delta_{\min}, \Delta_{\max}]$, contraction factor $\gamma_1 \in (0,1)$, expansion factor $\gamma_2 > 1$, bounds $0 < \Delta_{\min} < \Delta_{\max}$, and threshold $\eta \in (0, 1)$.}
\STATE{Initialize momentum vector $\bm{m}_0 = \bm{0}$.}
\FOR{$k = 0, 1, 2, \ldots$}
    \STATE{\textbf{Step 1: Subspace frame construction.}}
    \STATE{\quad If $\|\bm{m}_k\|_2 > 0$: set $\bd_1^{(k)} = \bm{m}_k / \|\bm{m}_k\|_2$; else draw $\bz_1 \sim \mathcal{N}(\bm 0, \bm I_n)$ and set $\bd_1^{(k)} = \bz_1 / \sqrt{n}$.}
    \STATE{\quad For $i = 2, \ldots, p$: draw $\bz_i \sim \mathcal{N}(\bm 0, \bm I_n)$ independently and set $\bd_i^{(k)} = \bz_i / \sqrt{n}$.}
    \STATE{\quad Form the frame matrix $\bm{D}_k = [\bd_1^{(k)}, \ldots, \bd_p^{(k)}] \in \mathbb{R}^{n \times p}$.}
    \STATE{\textbf{Step 2: Subspace gradient estimation.}}
    \STATE{\quad Evaluate $f_0 = f(\bx_k)$.}
    \STATE{\quad For $i = 1, \ldots, p$: evaluate $f_i^\pm = f(\bx_k \pm \Delta_k \, \bd_i^{(k)})$ and set $g_{k,i} = (f_i^+ - f_i^-) / (2\Delta_k)$.}
    \STATE{\quad Assemble the subspace gradient $\bg_k = (g_{k,1}, \ldots, g_{k,p})^\top$.}
    \STATE{\quad If $\|\bg_k\|_2 = 0$: set $\Delta_{k+1} = \max(\gamma_1 \Delta_k, \Delta_{\min})$, retain $\bx_k$ and $\bm m_k$, and continue.}
    \STATE{\textbf{Step 3: Trust-region trial step.}}
    \STATE{\quad Compute $\bs_s = -\Delta_k \, \bg_k / \|\bg_k\|_2$ and trial iterate $\bx^+ = \bx_k + \bm{D}_k \bs_s$.}
    \STATE{\quad Evaluate $f^+ = f(\bx^+)$, predicted reduction $\operatorname{pred}_k = \Delta_k \|\bg_k\|_2$, and ratio $\rho_k = (f_0 - f^+) / \operatorname{pred}_k$.}
    \STATE{\textbf{Step 4a: Iterate acceptance.}}
    \IF{$f^+ < f_0$}
        \STATE{$\bx_{k+1} = \bx^+$, \; $\bm{m}_{k+1} = \bx^+ - \bx_k$. \hfill \textit{(accepted)}}
    \ELSE
        \STATE{$\bx_{k+1} = \bx_k$, \; $\bm{m}_{k+1} = \bm{m}_k$. \hfill \textit{(rejected)}}
    \ENDIF
    \STATE{\textbf{Step 4b: Trust-region radius adaptation.}}
    \IF{$\rho_k \geq \eta$}
        \STATE{$\Delta_{k+1} = \min(\gamma_2 \Delta_k, \Delta_{\max})$. \hfill \textit{(successful)}}
    \ELSE
        \STATE{$\Delta_{k+1} = \max(\gamma_1 \Delta_k, \Delta_{\min})$. \hfill \textit{(unsuccessful)}}
    \ENDIF
\ENDFOR
\end{algorithmic}
\end{algorithm}

\subsection{Subspace construction}
\label{subsec:construction}

At iteration $k$, optimization is restricted to the affine subspace $\bx_k + \mathcal{S}_k$, where $\mathcal{S}_k \subset \mathbb{R}^n$ is spanned by $p$ direction vectors $\bd_1^{(k)}, \ldots, \bd_p^{(k)}$. Gathering these directions as columns of the frame matrix $\bm D_k = [\bd_1^{(k)}, \ldots, \bd_p^{(k)}] \in \mathbb{R}^{n \times p}$, candidate displacements within $\mathcal{S}_k$ are parameterized by a coordinate vector $\bs_s \in \mathbb{R}^p$ via
\begin{equation}
\bx(\bs_s) = \bx_k + \bm D_k \bs_s = \bx_k + \sum_{i=1}^p s_{s,i} \bd_i^{(k)}.
\label{eq:subspace_param}
\end{equation}

The first search direction $\bd_1^{(k)}$ incorporates momentum from the optimization trajectory. When an accepted step is available, it is set to the normalized displacement
\begin{equation}
\bd_1^{(k)} = \frac{\bm m_k}{\|\bm m_k\|_2},
\label{eq:momentum_dir}
\end{equation}
where $\bm m_k$ denotes the parameter change resulting from the most recent successful step. Because consecutive descent steps in continuous optimization often exhibit directional correlation, retaining \eqref{eq:momentum_dir} injects historical progress into the subspace without requiring additional function evaluations. The remaining $p-1$ directions are drawn independently as
\begin{equation}
\bd_i^{(k)} = \frac{\bz_i}{\sqrt{n}}, \qquad \bz_i \sim \mathcal{N}(\bm{0}, \bm{I}_n), \quad i = 2, \ldots, p,
\label{eq:random_dirs}
\end{equation}
with the same distribution supplying $\bd_1^{(k)}$ when no accepted step is available. The factor $1/\sqrt{n}$ normalizes the expected Euclidean norm such that $\mathbb E\|\bd_i^{(k)}\|_2^2 = 1$, providing isotropic coverage of the parameter space.

A fundamental geometric property of high-dimensional Gaussian vectors is their near-orthogonality: for $i \ne j$,
\begin{equation}
\mathbb E\bigl[\bd_i^{(k)\top} \bd_j^{(k)}\bigr] = 0
\qquad\text{and}\qquad
\mathbb E\bigl[ \bigl(\bd_i^{(k)\top} \bd_j^{(k)}\bigr)^2 \bigr] = \frac{1}{n}.
\label{eq:near_ortho}
\end{equation}
Consequently, the pairwise inner products have root-mean-square magnitude $n^{-1/2} \approx 10^{-4}$ at $n \sim 10^8$. Because the Gaussian directions are already nearly orthogonal in high dimensions, explicit Gram--Schmidt orthogonalization is omitted, thereby avoiding substantial computation and memory overhead; the theoretical guarantees of \texttt{MpSub} under this unorthogonalized Gaussian frame are rigorously established in Section~\ref{sec:analysis}. The dimension $p$ controls how many directional derivatives enter the subspace model; its choice is examined empirically in Section~\ref{sec:experiments}.

\subsection{Subspace model and trial step}
\label{subsec:gradient}
\label{subsec:trial}

For each direction $\bd_i^{(k)}$, \texttt{MpSub} estimates the corresponding directional derivative via central differences:
\begin{equation}
g_{k,i} = \frac{f(\bx_k + \Delta_k \, \bd_i^{(k)}) - f(\bx_k - \Delta_k \, \bd_i^{(k)})}{2\Delta_k}, \quad i = 1, \ldots, p.
\label{eq:central_diff}
\end{equation}
Stacking these estimates yields the subspace gradient $\bg_k = (g_{k,1}, \ldots, g_{k,p})^\top \in \mathbb{R}^p$, where each entry $g_{k,i}$ approximates $\nabla f(\bx_k)^\top \bd_i^{(k)}$. Setting the finite-difference perturbation size equal to the trust-region radius $\Delta_k$ couples the sampling resolution directly to the radius of model validity. Under the Lipschitz continuity of $\nabla f$, the finite-difference error is bounded linearly by $\Delta_k$, with explicit bounds established in Lemma~\ref{lem:fd}.

The estimated gradient defines a local linear surrogate model on the subspace coordinates:
\begin{equation}
m_k(\bs_s) = f(\bx_k) + \bg_k^\top \bs_s.
\label{eq:linear_model}
\end{equation}
The trial step is determined by minimizing this linear model subject to the trust-region constraint:
\begin{equation}
\min_{\bs_s \in \mathbb{R}^p}\; \bg_k^\top \bs_s \quad \text{subject to} \quad \|\bs_s\|_2 \leq \Delta_k.
\label{eq:subproblem_formulation}
\end{equation}
Whenever $\bg_k \ne \bm 0$, problem~\eqref{eq:subproblem_formulation} admits the unique closed-form solution
\begin{equation}
\bs_s = -\Delta_k \frac{\bg_k}{\|\bg_k\|_2},
\label{eq:closed_form_step}
\end{equation}
which corresponds to the steepest descent direction on the trust-region boundary. Mapping $\bs_s$ back to the ambient parameter space via \eqref{eq:subspace_param} yields the candidate iterate
\begin{equation}
\bx^+ = \bx_k + \bm D_k \bs_s = \bx_k - \frac{\Delta_k}{\|\bg_k\|_2} \sum_{i=1}^p g_{k,i} \bd_i^{(k)},
\label{eq:full_trial_point}
\end{equation}
with the predicted objective reduction given by
\begin{equation}
\operatorname{pred}_k = m_k(\bm 0) - m_k(\bs_s) = -\bg_k^\top \bs_s = \Delta_k \|\bg_k\|_2 > 0.
\label{eq:pred_reduction}
\end{equation}
If $\bg_k = \bm 0$, the estimated gradient vanishes across the subspace; in this event, the method contracts the trust-region radius, sets $\bx_{k+1} = \bx_k$ and $\bm m_{k+1} = \bm m_k$, and proceeds to the next iteration without evaluating an additional trial point.

\subsection{Acceptance and radius update}
\label{subsec:update}

Upon evaluating the candidate loss $f^+ = f(\bx^+)$, \texttt{MpSub} assesses the agreement between the surrogate model and the objective through the ratio
\begin{equation}
\rho_k = \frac{f(\bx_k) - f(\bx^+)}{\Delta_k \|\bg_k\|_2}.
\label{eq:agreement_ratio}
\end{equation}
The algorithm decouples iterate acceptance (Step 4a) from trust-region radius adaptation (Step 4b). The candidate point is accepted whenever it yields a strict objective decrease:
\begin{equation}
\bx_{k+1} =
\begin{cases}
\bx^+, & \text{if } f(\bx^+) < f(\bx_k), \\
\bx_k, & \text{otherwise},
\end{cases}
\qquad
\bm m_{k+1} =
\begin{cases}
\bx^+ - \bx_k, & \text{if } f(\bx^+) < f(\bx_k), \\
\bm m_k, & \text{otherwise},
\end{cases}
\label{eq:accept_rule}
\end{equation}
while the trust-region radius is updated according to the ratio test:
\begin{equation}
\Delta_{k+1} =
\begin{cases}
\min\{\gamma_2 \Delta_k, \Delta_{\max}\}, & \text{if } \rho_k \geq \eta, \\
\max\{\gamma_1 \Delta_k, \Delta_{\min}\}, & \text{if } \rho_k < \eta.
\end{cases}
\label{eq:radius_update_rule}
\end{equation}

Unlike classical trust-region methods \cite{conn2000trust} that accept trial points only when $\rho_k \ge \eta$, the rules \eqref{eq:accept_rule}--\eqref{eq:radius_update_rule} separate acceptance from radius adjustment. In zeroth-order settings where forward passes dominate execution time, enforcing $\rho_k \ge \eta$ for step acceptance would discard trial points that achieve objective reduction whenever $0 < f(\bx_k) - f(\bx^+) < \eta \operatorname{pred}_k$. Under \eqref{eq:accept_rule}, every candidate yielding a strictly lower observed loss is retained. Concurrently, the ratio $\rho_k$ regulates the exploration scale: the trust region expands when the linear surrogate reliably predicts the objective reduction ($\rho_k \ge \eta$), and contracts when nonlinearities or estimation errors dominate ($\rho_k < \eta$).

\subsection{Algorithmic realization for LLM fine-tuning}
\label{subsec:llm_realization}

While Algorithm~\ref{alg:mpsub} is stated for a generic objective $f$, adapting \texttt{MpSub} to full-parameter fine-tuning of large language models presents two critical computational challenges: (i) the training objective is evaluated over stochastic minibatches, and (ii) storing an explicit search frame $\bm D_k \in \mathbb{R}^{n \times p}$ in an ambient dimension of $n \sim 10^8$ creates prohibitive GPU memory overhead. We resolve these challenges through shared-minibatch evaluations and seed-regenerated in-place parameter perturbations.

\subsubsection{Consistent minibatch evaluation}
In stochastic language model fine-tuning, the objective available at training step $k$ is defined by a sampled minibatch $\mathcal B_k \subset \mathcal D$:
\begin{equation}
f_k(\bx) = \frac{1}{|\mathcal B_k|} \sum_{\xi \in \mathcal B_k} \ell(\bx; \xi).
\label{eq:minibatch_loss}
\end{equation}
A key algorithmic requirement of trust-region methods is that model-objective agreement measures local landscape geometry rather than data sampling discrepancy. Accordingly, in our implementation, the minibatch $\mathcal B_k$ is held fixed throughout iteration $k$. The baseline loss $f_k(\bx_k)$, the $2p$ central-difference evaluations $f_k(\bx_k \pm \Delta_k \bd_i^{(k)})$, and the trial evaluation $f_k(\bx^+)$ are all computed on the exact same minibatch $\mathcal B_k$. Holding $\mathcal B_k$ constant ensures that the estimated subspace gradient $\bg_k$ and the agreement ratio
\begin{equation}
\rho_k = \frac{f_k(\bx_k) - f_k(\bx^+)}{\Delta_k \|\bg_k\|_2}
\label{eq:minibatch_rho}
\end{equation}
evaluate variations of a consistent slice of the loss landscape, preventing false rejections or erratic radius oscillations induced by stochastic batch variance.

Across iterations, a fresh minibatch $\mathcal B_{k+1}$ is drawn from the data loader. The trust-region radius $\Delta_k$ and the accepted parameter displacement $\bm m_k$ persist across minibatch boundaries, transferring optimization momentum and step-size calibration smoothly throughout training. Each iteration expends $2p + 2$ forward passes on the training objective ($42$ passes when $p = 20$).

\subsubsection{In-place perturbations and seed regeneration}
Storing the frame matrix $\bm D_k \in \mathbb{R}^{n \times p}$ explicitly would require $pn$ floating-point numbers. For an LLM such as OPT-125M ($n \approx 1.25 \times 10^8$) with $p = 20$ in FP32, this would consume approximately $10$~GB of GPU memory---exceeding the entire parameter storage of the model itself.

To eliminate this memory overhead, \texttt{MpSub} never materializes the matrix $\bm D_k$ in memory. Instead, each exploratory direction $\bd_i^{(k)}$ ($i = 2, \ldots, p$) is tied to a deterministic integer seed derived synchronously from the global training seed, the iteration index $k$, and the direction index $i$:
\begin{equation}
\operatorname{seed}(k, i) = (\operatorname{base\_seed} \times C_1 + k \times C_2 + i \times C_3) \bmod M,
\label{eq:seed_formula}
\end{equation}
where $C_1, C_2, C_3$ and $M$ are fixed prime constants.

During the subspace gradient estimation phase, direction $\bd_i^{(k)}$ is synthesized tensor by tensor directly on the accelerator using a pseudorandom generator initialized with $\operatorname{seed}(k, i)$. The parameter tensor is perturbed in place by $+\Delta_k \bd_i^{(k)}$ to evaluate $f_k(\bx_k + \Delta_k \bd_i^{(k)})$, then adjusted by $-2\Delta_k \bd_i^{(k)}$ to evaluate $f_k(\bx_k - \Delta_k \bd_i^{(k)})$, and finally restored by adding back $+\Delta_k \bd_i^{(k)}$. Because the pseudorandom generator reproduces the identical sequence upon reseeding, each direction is reconstructed exactly at zero memory cost.

Once the subspace gradient $\bg_k$ and the coordinate step $\bs_s = -\Delta_k \bg_k / \|\bg_k\|_2$ are computed, the directions are regenerated a final time and combined in place to form the trial displacement
\begin{equation}
\bm u_k = \sum_{i=1}^p s_{s,i} \bd_i^{(k)}.
\label{eq:trial_displacement}
\end{equation}
The displacement $\bm u_k$ is accumulated in an auxiliary buffer of dimension $n$ while advancing the model weights to $\bx^+ = \bx_k + \bm u_k$. If the trial step is accepted, $\bm u_k$ is preserved as the new momentum displacement $\bm m_{k+1}$; if rejected, subtracting $\bm u_k$ immediately restores the base weights $\bx_k$.

Consequently, training is executed exclusively through forward inference passes. Auxiliary storage is strictly confined to the momentum vector $\bm m_k$ and a single trial step accumulator, both of size $n$, keeping the training memory footprint at the inference level and completely independent of the subspace dimension $p$.

\section{Theoretical Analysis}
\label{sec:analysis}

We analyze \texttt{MpSub} for a deterministic objective under the Gaussian direction sampling used in Algorithm~\ref{alg:mpsub}. No orthogonality between the search directions is assumed. The analysis first gives error and descent bounds for a general frame $\bm D_k$, and then uses the distribution of the Gaussian directions to obtain a global first-order convergence result.

Throughout this section, we set $\Delta_{\min}=0$ for the asymptotic analysis. We also use the following safeguarded radius update with a fixed constant $\kappa>0$:
\begin{equation}
\label{eq:safeguarded_radius}
\Delta_{k+1} =
\begin{cases}
\min\{\gamma_2\Delta_k,\Delta_{\max}\}, & \text{if } \rho_k\ge\eta \text{ and } \|\bg_k\|_2\ge\kappa\Delta_k, \\
\gamma_1\Delta_k, & \text{otherwise}.
\end{cases}
\end{equation}
The acceptance rule is unchanged: a trial point is accepted whenever it gives a strict decrease in $f$.

\begin{assumption}
\label{ass:smooth}
The objective function $f:\mathbb R^n\to\mathbb R$ is continuously differentiable and bounded below. Its gradient is Lipschitz continuous with constant $L>0$:
\begin{equation}
\label{eq:lipschitz_grad}
\|\nabla f(\bx)-\nabla f(\bm y)\|_2 \le L\|\bx-\bm y\|_2, \qquad \forall\, \bx,\bm y\in\mathbb R^n.
\end{equation}
\end{assumption}

Let $\bm D_k = [\bd_1^{(k)},\ldots,\bd_p^{(k)}] \in \mathbb{R}^{n \times p}$ be the frame generated at iteration $k$, and define the exact directional derivatives
\[
\hat g_{k,i} \coloneqq \nabla f(\bx_k)^\top\bd_i^{(k)}, \qquad \hat{\bg}_k \coloneqq \bm D_k^\top\nabla f(\bx_k).
\]
The finite-difference estimates are denoted by
\[
\bg_k = (g_{k,1},\ldots,g_{k,p})^\top, \qquad g_{k,i} = \frac{f(\bx_k+\Delta_k\bd_i^{(k)}) - f(\bx_k-\Delta_k\bd_i^{(k)})}{2\Delta_k}.
\]

\subsection{Finite differences and random subspaces}
\label{subsec:estimation}

We first bound the finite-difference error without assuming that the directions have unit norm.

\begin{lemma}[Finite-difference error]
\label{lem:fd_general}
\label{lem:fd}
Suppose Assumption~\ref{ass:smooth} holds. Then
\begin{equation}
\label{eq:fd_general_component}
|g_{k,i}-\hat g_{k,i}| \le \frac{L\Delta_k}{2} \|\bd_i^{(k)}\|_2^2, \qquad i=1,\ldots,p.
\end{equation}
Consequently,
\begin{equation}
\label{eq:fd_general_vector}
\|\bg_k-\hat{\bg}_k\|_2 \le \frac{L\Delta_k}{2} Q_k, \qquad Q_k \coloneqq \left( \sum_{i=1}^p \|\bd_i^{(k)}\|_2^4 \right)^{1/2}.
\end{equation}
\end{lemma}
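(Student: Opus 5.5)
The plan is to treat each coordinate separately, using the standard quadratic upper/lower bound implied by the Lipschitz gradient, and then combine the coordinate bounds in the Euclidean norm. Fix $k$ and $i$, and write $\bd=\bd_i^{(k)}$, $\bx=\bx_k$, $\Delta=\Delta_k>0$. Under Assumption~\ref{ass:smooth}, the fundamental theorem of calculus along the segment gives, for any $t\in\mathbb R$,
\[
f(\bx+t\bd)-f(\bx)-t\nabla f(\bx)^\top\bd=\int_0^1 \bigl(\nabla f(\bx+\tau t\bd)-\nabla f(\bx)\bigr)^\top (t\bd)\,d\tau .
\]
Applying Cauchy--Schwarz and \eqref{eq:lipschitz_grad} to the integrand bounds it by $L\tau t^2\|\bd\|_2^2$. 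Integrating in $\tau$ yields the remainder bound $|R(t)|\le \tfrac{L}{2}t^2\|\bd\|_2^2$, where $R(t)$ denotes the left-hand side.

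Next I will form the central difference. I take $t=\Delta$ and $t=-\Delta$ and subtract the two expansions. The zeroth-order terms $f(\bx)$ cancel, and the linear terms combine to $2\Delta\,\hat g_{k,i}$, so
\[
g_{k,i}-\hat g_{k,i}=\frac{R(\Delta)-R(-\Delta)}{2\Delta}.
\]
The crude bound $|R(\Delta)|+|R(-\Delta)|\le L\Delta^2\|\bd\|_2^2$ then gives exactly $\tfrac{L\Delta}{2}\|\bd\|_2^2$, which is \eqref{eq:fd_general_component}.

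I do not expect to need the sharper cancellation that central differences usually give for $C^2$ or Hessian-Lipschitz functions. The stated constant matches the triangle-inequality bound, so this step is not an obstacle. The one point to check is that the bound is stated without assuming $\|\bd\|_2=1$, which is why $\|\bd\|_2^2$ is carried through explicitly rather than normalized away.

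For \eqref{eq:fd_general_vector}, I will square the componentwise bounds and sum over $i=1,\ldots,p$:
\[
\|\bg_k-\hat{\bg}_k\|_2^2\le \frac{L^2\Delta_k^2}{4}\sum_{i=1}^p\|\bd_i^{(k)}\|_2^4=\frac{L^2\Delta_k^2}{4}Q_k^2 .
\]
Taking square roots completes the argument. The whole proof is routine; the only care needed is in the integral remainder estimate, namely the factor $\tfrac12$ that comes from $\int_0^1\tau\,d\tau$.
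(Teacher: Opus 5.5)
Your proposal is correct and follows the paper's proof essentially step for step. You use the Lipschitz-gradient remainder bound at $\pm\Delta_k$, subtract to get $(R_{+}-R_{-})/(2\Delta_k)$, apply the triangle inequality, and then square and sum the componentwise bounds; the only difference is that you derive the remainder bound from its integral form, which the paper quotes directly.
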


\begin{proof}
For any direction $\bd$ and $t>0$, Lipschitz continuity of the gradient gives
\[
\left| f(\bx_k+t\bd) - f(\bx_k) - t\nabla f(\bx_k)^\top\bd \right| \le \frac{L}{2}t^2\|\bd\|_2^2.
\]
Applying this bound to $t=\Delta_k$ in the directions $\bd_i^{(k)}$ and $-\bd_i^{(k)}$, write
\[
f(\bx_k+\Delta_k\bd_i^{(k)}) = f(\bx_k) + \Delta_k\hat g_{k,i} + R_{i,+},
\]
and
\[
f(\bx_k-\Delta_k\bd_i^{(k)}) = f(\bx_k) - \Delta_k\hat g_{k,i} + R_{i,-},
\]
where $|R_{i,\pm}| \le \frac{L\Delta_k^2}{2} \|\bd_i^{(k)}\|_2^2$.
Subtracting the two expressions gives
\[
g_{k,i}-\hat g_{k,i} = \frac{R_{i,+}-R_{i,-}}{2\Delta_k},
\]
which proves \eqref{eq:fd_general_component}. Summing the squared componentwise bounds gives \eqref{eq:fd_general_vector}.
\end{proof}

If the Hessian is Lipschitz continuous with constant $M$, the same central-difference argument gives the sharper bound
\[
|g_{k,i}-\hat g_{k,i}| \le \frac{M\Delta_k^2}{6} \|\bd_i^{(k)}\|_2^3.
\]
This stronger estimate is not needed below.

We next use the actual Gaussian sampling in Algorithm~\ref{alg:mpsub}. Let $\mathcal F_k$ contain the algorithmic history before the fresh exploratory directions at iteration $k$ are sampled. Conditionally on $\mathcal F_k$,
\[
\bd_i^{(k)} = \frac{\bz_i}{\sqrt n}, \qquad \bz_i\sim\mathcal N(\bm0,\bm I_n), \qquad i=2,\ldots,p,
\]
independently.

\begin{lemma}[Gaussian gradient capture]
\label{lem:gaussian_capture}
\label{lem:capture}
Let $\bm v_k = \nabla f(\bx_k)$. Conditionally on $\mathcal F_k$,
\begin{equation}
\label{eq:chi_square_capture}
\sum_{i=2}^p \left( \bm v_k^\top\bd_i^{(k)} \right)^2 \overset{d}{=} \frac{\|\bm v_k\|_2^2}{n} \chi^2_{p-1}.
\end{equation}
In particular,
\begin{equation}
\label{eq:expected_gaussian_capture}
\mathbb E \left[ \sum_{i=2}^p \left( \bm v_k^\top\bd_i^{(k)} \right)^2 \;\middle|\; \mathcal F_k \right] = \frac{p-1}{n} \|\bm v_k\|_2^2.
\end{equation}
Moreover, for every $q\in(0,1)$, there exist constants $\alpha>0$ and $R>1$, independent of $k$, such that whenever $\bm v_k\ne\bm 0$,
\begin{equation}
\label{eq:good_frame_probability}
\mathbb P \left( \|\hat{\bg}_k\|_2 \ge \alpha\|\bm v_k\|_2, \quad \max_{1\le i\le p} \|\bd_i^{(k)}\|_2 \le R \;\middle|\; \mathcal F_k \right) \ge q.
\end{equation}
\end{lemma}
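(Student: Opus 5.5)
The plan is to work conditionally on $\mathcal F_k$. Then $\bm v_k=\nabla f(\bx_k)$ is fixed, and so is $\bd_1^{(k)}$ whenever $\bm m_k\ne\bm 0$. When $\bm m_k=\bm 0$, $\bd_1^{(k)}$ is also a fresh Gaussian direction; this case only helps, and I will handle it alongside the others.

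For \eqref{eq:chi_square_capture}, fix $\bm v_k\ne\bm 0$; the case $\bm v_k=\bm 0$ is trivial. For $i\ge 2$ I will use rotation invariance of $\mathcal N(\bm 0,\bm I_n)$: the scalar $\bm v_k^\top\bz_i$ is $\mathcal N(0,\|\bm v_k\|_2^2)$. These scalars are independent across $i$. Hence $\bm v_k^\top\bd_i^{(k)}=\frac{\|\bm v_k\|_2}{\sqrt n}\zeta_i$ with $\zeta_2,\ldots,\zeta_p$ i.i.d.\ standard normal. Summing the squares gives $\frac{\|\bm v_k\|_2^2}{n}\sum_{i\ge2}\zeta_i^2$, which is the $\chi^2_{p-1}$ law. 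Taking the mean $\mathbb E\chi^2_{p-1}=p-1$ gives \eqref{eq:expected_gaussian_capture}.

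For \eqref{eq:good_frame_probability}, the plan is to lower bound $\|\hat\bg_k\|_2^2\ge\sum_{i=2}^p(\bm v_k^\top\bd_i^{(k)})^2$, which drops the momentum coordinate. It then suffices to control two events for the fresh directions. The first is $E_1=\{\chi^2_{p-1}\ge c\}$, for which I pick $c>0$ small enough that $\mathbb P(\chi^2_{p-1}<c)\le(1-q)/2$. This is possible because $\chi^2_{p-1}$ has a continuous distribution with no atom at $0$; it requires $p\ge2$, which I assume. Under $E_1$ we have $\|\hat\bg_k\|_2\ge\sqrt{c/n}\,\|\bm v_k\|_2$, so I set $\alpha=\sqrt{c/n}$. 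This constant depends on $n$ and $p$ but not on $k$, which is all the statement requires. The second event is $E_2=\{\max_i\|\bd_i^{(k)}\|_2\le R\}$. The momentum direction has unit norm deterministically, so only the Gaussian ones matter. For $i\ge 2$ (and for $i=1$ when $\bm m_k=\bm 0$), $n\|\bd_i^{(k)}\|_2^2\sim\chi^2_n$. Markov's inequality gives $\mathbb P(\|\bd_i^{(k)}\|_2>R)\le 1/R^2$. A union bound over at most $p$ directions gives $\mathbb P(E_2^c)\le p/R^2\le(1-q)/2$ once $R\ge\max\{2,\sqrt{2p/(1-q)}\}$. Then $\mathbb P(E_1\cap E_2\mid\mathcal F_k)\ge 1-(1-q)=q$.

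The main obstacle is the case $\bm m_k=\bm 0$ and the precise meaning of ``conditionally on $\mathcal F_k$''. In that case $\bd_1^{(k)}$ is random, and I must make sure it is included in the union bound for $E_2$; the choice of $R$ above already covers all $p$ directions. Another point to check is that $E_1$ and $E_2$ need not be independent, since both depend on the same $\bz_i$. The union bound on complements avoids any independence requirement. I also need that $\alpha$ and $R$ depend only on $(n,p,q)$, so they are uniform in $k$ and in the history. This holds because the conditional laws used above do not depend on $\mathcal F_k$ beyond the normalization by $\|\bm v_k\|_2$.
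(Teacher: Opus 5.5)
Your proposal is correct and follows the paper's proof essentially step for step: conditional independence and rotation invariance give the $\chi^2_{p-1}$ law, dropping the momentum coordinate yields $\alpha=\sqrt{c/n}$, and a union bound over the direction norms finishes the argument. The differences are only refinements. Your Markov bound $\mathbb P(\|\bm{d}_i^{(k)}\|_2>R)\le 1/R^2$ makes $R$ explicit where the paper only appeals to almost-sure finiteness of the Gaussian norms, and you rightly flag that $p\ge 2$ is needed, which the paper uses implicitly when it asserts $\chi^2_{p-1}>0$ almost surely.
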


\begin{proof}
For every $i=2,\ldots,p$,
\[
\bm v_k^\top\bd_i^{(k)} = \frac{\bm v_k^\top\bz_i}{\sqrt n} \sim \mathcal N \left( 0, \frac{\|\bm v_k\|_2^2}{n} \right),
\]
and these random variables are conditionally independent. Equation~\eqref{eq:chi_square_capture} follows immediately, and taking expectations gives \eqref{eq:expected_gaussian_capture}.

To prove \eqref{eq:good_frame_probability}, fix $q<1$. Since a $\chi^2_{p-1}$ random variable is positive almost surely, we can choose $\tau>0$ sufficiently small that $\mathbb P(\chi^2_{p-1}\ge\tau)$ is arbitrarily close to one. Setting $\alpha=\sqrt{\tau/n}$, equation~\eqref{eq:chi_square_capture} implies
\[
\left( \sum_{i=2}^p (\bm v_k^\top\bd_i^{(k)})^2 \right)^{1/2} \ge \alpha\|\bm v_k\|_2
\]
with probability arbitrarily close to one.

The Gaussian norms $\|\bz_i\|_2/\sqrt n$ are finite almost surely, so $R$ can also be chosen sufficiently large that all newly sampled directions have norm at most $R$ with probability arbitrarily close to one. The momentum direction, when present, has norm one. A union bound then gives \eqref{eq:good_frame_probability}.
\end{proof}

The next result gives the decrease produced by the trial step for a general, possibly nonorthogonal frame.

\begin{proposition}[Decrease for a general frame]
\label{prop:general_descent}
\label{prop:descent}
Suppose Assumption~\ref{ass:smooth} holds and $\bg_k\ne\bm0$. Let
\[
\bs_k = -\Delta_k \frac{\bg_k}{\|\bg_k\|_2}, \qquad \bx_k^+ = \bx_k+\bm D_k\bs_k,
\]
and define $\beta_k \coloneqq \|\bm D_k\|_2$. Then
\begin{equation}
\label{eq:general_decrease}
f(\bx_k)-f(\bx_k^+) \ge \Delta_k\|\bg_k\|_2 - \frac{L\Delta_k^2}{2} (Q_k+\beta_k^2).
\end{equation}
Consequently,
\begin{equation}
\label{eq:general_ratio}
\rho_k \ge 1 - \frac{L\Delta_k(Q_k+\beta_k^2)}{2\|\bg_k\|_2}.
\end{equation}
\end{proposition}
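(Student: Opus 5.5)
The plan is to use the standard descent lemma and split the first-order term into the model decrease plus a finite-difference error. Write $\bm u_k = \bm D_k\bs_k$. Assumption~\ref{ass:smooth} gives
\[
f(\bx_k^+) \le f(\bx_k) + \nabla f(\bx_k)^\top \bm u_k + \frac{L}{2}\|\bm u_k\|_2^2 .
\]
The quadratic term is handled directly. Since $\|\bs_k\|_2=\Delta_k$, we have $\|\bm u_k\|_2 \le \|\bm D_k\|_2\|\bs_k\|_2 = \beta_k\Delta_k$, so this term is at most $\frac{L}{2}\beta_k^2\Delta_k^2$. This is the only place the nonorthogonality of the frame enters, and it is absorbed into $\beta_k$.

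For the linear term, use $\nabla f(\bx_k)^\top\bm D_k\bs_k = \hat{\bg}_k^\top\bs_k$ and split it as
\[
\hat{\bg}_k^\top\bs_k = \bg_k^\top\bs_k + (\hat{\bg}_k-\bg_k)^\top\bs_k .
\]
By the closed form of $\bs_k$, the first piece equals $-\Delta_k\|\bg_k\|_2$, which is the predicted reduction \eqref{eq:pred_reduction}. For the second piece, apply Cauchy--Schwarz and then Lemma~\ref{lem:fd}:
\[
|(\hat{\bg}_k-\bg_k)^\top\bs_k| \le \|\bg_k-\hat{\bg}_k\|_2\,\Delta_k \le \frac{L\Delta_k^2}{2}Q_k .
\]
Combining these bounds gives
\[
f(\bx_k^+) \le f(\bx_k) - \Delta_k\|\bg_k\|_2 + \frac{L\Delta_k^2}{2}(Q_k+\beta_k^2),
\]
which rearranges to \eqref{eq:general_decrease}.

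For \eqref{eq:general_ratio}, divide \eqref{eq:general_decrease} by $\operatorname{pred}_k=\Delta_k\|\bg_k\|_2$, which is positive because $\bg_k\ne\bm 0$ and $\Delta_k>0$. This uses the definition \eqref{eq:agreement_ratio} of $\rho_k$.

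None of these steps is a real obstacle. The one point to check is that the error term is measured against $\bs_k$ in subspace coordinates. That is why $Q_k$ appears with no factor of $\beta_k$, whereas the curvature term is measured in the ambient space and so carries $\beta_k^2$.
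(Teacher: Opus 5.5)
Your proposal is correct and follows the paper's proof essentially step for step. Both use the descent lemma, split $\hat{\bg}_k^\top\bs_k = \bg_k^\top\bs_k + (\hat{\bg}_k-\bg_k)^\top\bs_k$ and control the error term by Cauchy--Schwarz and Lemma~\ref{lem:fd}, bound the quadratic term via $\|\bm D_k\bs_k\|_2\le\beta_k\Delta_k$, and divide by $\operatorname{pred}_k=\Delta_k\|\bg_k\|_2$ to obtain the ratio bound.
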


\begin{proof}
Let $\bm h_k = \bm D_k\bs_k$. Since $\hat{\bg}_k = \bm D_k^\top\nabla f(\bx_k)$, we have $\nabla f(\bx_k)^\top\bm h_k = \hat{\bg}_k^\top\bs_k$. Using $\bg_k^\top\bs_k = -\Delta_k\|\bg_k\|_2$ and Lemma~\ref{lem:fd_general},
\[
\begin{aligned}
\nabla f(\bx_k)^\top\bm h_k &= \bg_k^\top\bs_k + (\hat{\bg}_k-\bg_k)^\top\bs_k \\
&\le -\Delta_k\|\bg_k\|_2 + \|\hat{\bg}_k-\bg_k\|_2\|\bs_k\|_2 \\
&\le -\Delta_k\|\bg_k\|_2 + \frac{LQ_k}{2}\Delta_k^2.
\end{aligned}
\]
Also, $\|\bm h_k\|_2 \le \|\bm D_k\|_2\|\bs_k\|_2 = \beta_k\Delta_k$. The descent lemma therefore gives
\[
\begin{aligned}
f(\bx_k^+) &\le f(\bx_k) + \nabla f(\bx_k)^\top\bm h_k + \frac{L}{2}\|\bm h_k\|_2^2 \\
&\le f(\bx_k) - \Delta_k\|\bg_k\|_2 + \frac{L\Delta_k^2}{2} (Q_k+\beta_k^2),
\end{aligned}
\]
which proves \eqref{eq:general_decrease}. Dividing by the predicted reduction $\Delta_k\|\bg_k\|_2$ gives \eqref{eq:general_ratio}.
\end{proof}

\subsection{Global convergence}
\label{subsec:global_convergence}

We first establish a summability property of the trust-region radius. Let
\begin{equation}
\label{eq:expansion_set}
\mathcal S \coloneqq \left\{ k : \rho_k\ge\eta \text{ and } \|\bg_k\|_2\ge\kappa\Delta_k \right\}
\end{equation}
denote the set of iterations at which the trust-region radius is expanded.

\begin{lemma}[Summability of the trust-region radii]
\label{lem:radius_summability}
\label{lem:radius_decay}
Suppose Assumption~\ref{ass:smooth} holds, $\Delta_{\min}=0$, and the radius is updated according to \eqref{eq:safeguarded_radius}. Then
\begin{equation}
\label{eq:radius_square_summable}
\sum_{k=0}^{\infty}\Delta_k^2 < \infty.
\end{equation}
In particular,
\begin{equation}
\label{eq:radius_to_zero}
\lim_{k\to\infty} \Delta_k = 0.
\end{equation}
\end{lemma}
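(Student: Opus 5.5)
The proof follows the standard trust-region argument. On successful expansion iterations we get sufficient decrease proportional to $\Delta_k^2$. The total decrease of $f$ is bounded, so the squared radii summed over $\mathcal S$ are finite. Contractions then control the radii at all other iterations. The argument is deterministic (pathwise) and does not use Gaussian sampling.

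\emph{Step 1: decrease on $\mathcal S$.} Fix $k\in\mathcal S$. Then $\rho_k\ge\eta$, so $f(\bx_k)-f(\bx_k^+)\ge \eta\Delta_k\|\bg_k\|_2>0$. Hence the trial point is accepted and $\bx_{k+1}=\bx_k^+$. Combined with $\|\bg_k\|_2\ge\kappa\Delta_k$ this gives
\[
f(\bx_k)-f(\bx_{k+1}) \ge \eta\kappa\Delta_k^2 .
\]
On iterations outside $\mathcal S$, the acceptance rule only allows strict decrease or no change, so $f(\bx_{k+1})\le f(\bx_k)$. The case $\bg_k=\bm0$ also leaves $\bx$ unchanged and contracts the radius; I treat it as a non-$\mathcal S$ iteration. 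Thus $\{f(\bx_k)\}$ is nonincreasing. Since $f$ is bounded below by some $f_{\rm low}$, telescoping gives
\[
\sum_{k\in\mathcal S}\Delta_k^2 \le \frac{f(\bx_0)-f_{\rm low}}{\eta\kappa}<\infty .
\]

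\emph{Step 2: extension to all $k$.} This is the main bookkeeping step. Partition the indices into maximal runs of consecutive non-$\mathcal S$ iterations. Such a run begins either at $k=0$ or right after some $j\in\mathcal S$. Along it, $\Delta$ is multiplied by $\gamma_1$ at each step, with no floor since $\Delta_{\min}=0$. A run starting at index $j+1$ with $j\in\mathcal S$ has $\Delta_{j+1}\le\gamma_2\Delta_j$. So the sum of squares over that run is at most
\[
\gamma_2^2\Delta_j^2\sum_{t\ge0}\gamma_1^{2t}=\frac{\gamma_2^2}{1-\gamma_1^2}\,\Delta_j^2 .
\]
The initial run from $k=0$ contributes at most $\Delta_0^2/(1-\gamma_1^2)$. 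Each $j\in\mathcal S$ starts at most one run and contributes $\Delta_j^2$ itself. Summing therefore gives
\[
\sum_{k}\Delta_k^2\le \frac{\Delta_0^2}{1-\gamma_1^2}+\Bigl(1+\frac{\gamma_2^2}{1-\gamma_1^2}\Bigr)\sum_{j\in\mathcal S}\Delta_j^2<\infty .
\]
This yields the summability claim. If $\mathcal S$ is finite the same bound applies, the tail being a single geometric run. The limit $\Delta_k\to0$ follows because the terms of a convergent series tend to zero.

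The only subtle point is Step 1's reliance on acceptance. We must check that $\rho_k\ge\eta>0$ forces strict decrease, so the iterate actually moves and the decrease is realized. We also need that rejected or zero-gradient iterations never increase $f$. Both follow directly from the acceptance rule and $\operatorname{pred}_k>0$, so I expect no real obstacle beyond careful indexing of the runs.
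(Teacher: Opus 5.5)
Your proof is correct and follows essentially the same route as the paper: the sufficient decrease $f(\bx_k)-f(\bx_{k+1})\ge\eta\kappa\Delta_k^2$ on $\mathcal S$ bounds $\sum_{k\in\mathcal S}\Delta_k^2$, and geometric contraction off $\mathcal S$ extends this bound to all $k$. The only difference is bookkeeping. Instead of decomposing the iterations into maximal runs outside $\mathcal S$, the paper sums the one-step inequality $\Delta_{k+1}^2\le\gamma_1^2\Delta_k^2+\gamma_2^2\Delta_k^2\mathbf{1}_{\{k\in\mathcal S\}}$ over $k=0,\ldots,N$, which gives the same kind of bound more compactly.
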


\begin{proof}
For any $k \in \mathcal S$, the definition of $\rho_k$ and \eqref{eq:expansion_set} give
\[
f(\bx_k)-f(\bx_k^+) = \rho_k\Delta_k\|\bg_k\|_2 \ge \eta\kappa\Delta_k^2.
\]
Since the right-hand side is strictly positive, the trial point is accepted. Because $f$ is bounded below by a finite constant $f_{\mathrm{low}}$, summing over all successful expansion steps yields
\[
\sum_{k\in\mathcal S}\Delta_k^2 \le \frac{f(\bx_0)-f_{\mathrm{low}}}{\eta\kappa} < \infty.
\]
For every iteration $k$, the update rule \eqref{eq:safeguarded_radius} implies
\[
\Delta_{k+1}^2 \le \gamma_1^2\Delta_k^2 + \gamma_2^2\Delta_k^2\mathbf{1}_{\{k\in\mathcal S\}}.
\]
Summing this inequality from $k=0$ to $N$ yields
\[
(1-\gamma_1^2) \sum_{k=1}^{N}\Delta_k^2 + \Delta_{N+1}^2 \le \gamma_1^2\Delta_0^2 + \gamma_2^2 \sum_{k\in\mathcal S}\Delta_k^2.
\]
Because $\gamma_1\in(0,1)$, we have $1-\gamma_1^2 > 0$. Taking $N\to\infty$ establishes \eqref{eq:radius_square_summable}, which directly implies $\lim_{k\to\infty}\Delta_k = 0$.
\end{proof}

We next establish that the full gradient cannot remain bounded away from zero along any infinite subsequence.

\begin{theorem}[Subsequential convergence]
\label{thm:liminf_convergence}
Suppose Assumption~\ref{ass:smooth} holds and $p\ge2$. Let the exploratory directions be sampled independently as
\[
\bd_i^{(k)} = \frac{\bz_i}{\sqrt n}, \qquad \bz_i\sim\mathcal N(\bm0,\bm I_n), \qquad i=2,\ldots,p.
\]
Then
\begin{equation}
\label{eq:liminf_gradient}
\liminf_{k\to\infty} \|\nabla f(\bx_k)\|_2 = 0 \qquad \text{almost surely}.
\end{equation}
\end{theorem}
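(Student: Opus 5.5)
Argue by contradiction. Suppose that, on an event $E$ of positive probability, there are $\epsilon>0$ and $K$ with $\|\nabla f(\bx_k)\|_2\ge\epsilon$ for all $k\ge K$. To make this measurable, take $E$ as a countable union over rational $\epsilon$ and integer $K$, so it suffices to treat fixed $\epsilon,K$. Lemma~\ref{lem:radius_summability} holds pathwise, so $\Delta_k\to0$ on every sample path. The goal is to show that on $E$ the radius must be expanded infinitely often with a uniform positive conditional probability. The expansion should occur often enough to contradict $\Delta_k\to0$, or better, the bound $\sum_{k\in\mathcal S}\Delta_k^2<\infty$ together with the geometric contraction otherwise.

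\emph{Step 1: a good frame forces expansion at small radius.} Fix $q\in(0,1)$, say $q=1/2$, and take $\alpha,R$ from Lemma~\ref{lem:gaussian_capture}. Call iteration $k$ good if $\|\hat{\bg}_k\|_2\ge\alpha\|\nabla f(\bx_k)\|_2$ and $\max_i\|\bd_i^{(k)}\|_2\le R$. On a good iteration we have $Q_k\le\sqrt p\,R^2$ and $\beta_k\le\|\bm D_k\|_F\le\sqrt p\,R$. By Lemma~\ref{lem:fd_general},
\[
\|\bg_k\|_2\ge\alpha\epsilon-\tfrac{L}{2}\sqrt p R^2\Delta_k .
\]
Hence there is a threshold $\bar\Delta>0$, depending only on $\epsilon,\alpha,R,L,p,\kappa,\eta$, such that $\Delta_k\le\bar\Delta$ implies three things: $\|\bg_k\|_2\ge\alpha\epsilon/2>0$, $\|\bg_k\|_2\ge\kappa\Delta_k$, and, by Proposition~\ref{prop:general_descent},
\[
\rho_k\ge1-\frac{L\Delta_k(\sqrt pR^2+pR^2)}{\alpha\epsilon}\ge\eta .
\]
Therefore $k\in\mathcal S$, and $\Delta_{k+1}=\min\{\gamma_2\Delta_k,\Delta_{\max}\}>\Delta_k$. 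Moreover, $f(\bx_k)-f(\bx_{k+1})\ge\eta\Delta_k\alpha\epsilon/2$.

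\emph{Step 2: convert to a contradiction.} I see two routes and would try the descent-based one first. On $E$, for $k\ge K$ large, $\Delta_k\le\bar\Delta$, so each good iteration yields decrease at least $c\,\Delta_k$ with $c=\eta\alpha\epsilon/2$. Since $f$ is bounded below, $\sum_{k\ \mathrm{good}}\Delta_k<\infty$ on $E$. By Lemma~\ref{lem:gaussian_capture}, $\mathbb P(k\ \mathrm{good}\mid\mathcal F_k)\ge q$ on $\{\|\nabla f(\bx_k)\|_2\ge\epsilon\}$. Note that $\bx_k,\Delta_k$ are $\mathcal F_k$-measurable, since the momentum direction is determined by history. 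The conditional Borel--Cantelli lemma (Lévy's extension) then gives
\[
\sum_k \Delta_k\mathbf 1_{\mathrm{good}}=\infty \quad \text{a.s. wherever } \sum_k q\,\Delta_k\mathbf 1_{E_k}=\infty .
\]
Here $E_k$ denotes the $\mathcal F_k$-measurable event $\{\|\nabla f(\bx_k)\|\ge\epsilon,\ \Delta_k\le\bar\Delta\}$. This requires the weighted version: apply it to the martingale $\sum(\mathbf 1_{\mathrm{good}}-\mathbb P(\cdot\mid\mathcal F_k))\Delta_k$, whose increments are bounded and square-summable because $\sum\Delta_k^2<\infty$, so it converges. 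It thus remains to show $\sum_k\Delta_k=\infty$ on $E$.

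\emph{Step 3: the radius is not summable (main obstacle).} The radius dynamics are $\Delta_{k+1}\ge\gamma_1\Delta_k$ always, and $\Delta_{k+1}\ge\min\{\gamma_2\Delta_k,\Delta_{\max}\}$ on good iterations past $K$. Suppose $\sum\Delta_k<\infty$. Then $\sum_k\Delta_k\mathbf 1_{\mathrm{good}}<\infty$ automatically, so Step 2 alone does not close the argument, and one must use the radius recursion. Consider $\log\Delta_k$, which is a walk with decrements $\log\gamma_1$ and increments $\log\gamma_2$ on good steps. Summability of $\Delta_k$ forces $\log\Delta_k\to-\infty$ at a linear rate. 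This requires the fraction of good iterations to be asymptotically at most $\theta=\log(1/\gamma_1)/\log(\gamma_2/\gamma_1)$. Choose $q>\theta$ in Lemma~\ref{lem:gaussian_capture}, which is allowed since $q$ may be arbitrarily close to $1$. A martingale strong law for $\sum_k(\mathbf 1_{\mathrm{good}}-\mathbb P(\mathrm{good}\mid\mathcal F_k))$ then shows that the good fraction is at least $q>\theta$ a.s. on $E$. Hence $\log\Delta_k\to+\infty$ in the drift sense, so $\Delta_k$ cannot tend to $0$, contradicting Lemma~\ref{lem:radius_summability}. This argument actually bypasses Step 2: a positive drift of $\log\Delta_k$ once $\Delta_k\le\bar\Delta$ keeps $\Delta_k$ from converging to zero, because the walk is capped only above $\bar\Delta$ and is pushed back whenever it falls below. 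This drift argument, with the careful choice of $q$ relative to $\gamma_1,\gamma_2$ and the martingale strong law, is the crux. I expect its delicate part to be handling iterations where $\Delta_k>\bar\Delta$, which are finitely many since $\Delta_k\to0$, together with the measurability of the stopping setup.
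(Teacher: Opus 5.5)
Your Steps 1 and 3 are essentially the paper's proof. With $q>\log(1/\gamma_1)/\log(\gamma_2/\gamma_1)$, a good frame forces $\Delta_{k+1}=\gamma_2\Delta_k$ once $\Delta_k$ is small, and the strong law for bounded martingale differences then gives $\log\Delta_k$ a positive linear drift, contradicting $\Delta_k\to0$ from Lemma~\ref{lem:radius_summability}; the descent-based Step 2 is not needed, nor is the remark that summability forces a linear decay rate of $\log\Delta_k$ (false in general, and unused since only $\Delta_k\to0$ matters), while your countable-union reduction over rational $\varepsilon$ and integer $K$ adds measurability care the paper leaves implicit.
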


\begin{proof}
Suppose, to the contrary, that there exist $\varepsilon>0$ and an index $K_0$ such that
\begin{equation}
\label{eq:gradient_lower_contradiction}
\|\nabla f(\bx_k)\|_2 \ge \varepsilon, \qquad \forall\, k\ge K_0.
\end{equation}
Choose a probability threshold
\begin{equation}
\label{eq:q_condition}
q > \max\left\{ \frac{1}{2},\, \frac{\log(1/\gamma_1)}{\log(\gamma_2/\gamma_1)} \right\}.
\end{equation}
By Lemma~\ref{lem:gaussian_capture}, there exist constants $\alpha>0$ and $R>1$ such that the event
\[
G_k \coloneqq \left\{ \|\hat{\bg}_k\|_2 \ge \alpha\|\nabla f(\bx_k)\|_2, \quad \max_{1\le i\le p}\|\bd_i^{(k)}\|_2\le R \right\}
\]
satisfies $\mathbb P(G_k\mid\mathcal F_k)\ge q$. On $G_k$, we have $Q_k\le\sqrt p\,R^2$ and $\|\bm D_k\|_2^2\le pR^2$. Applying Lemma~\ref{lem:fd_general} under \eqref{eq:gradient_lower_contradiction} gives
\[
\|\bg_k\|_2 \ge \|\hat{\bg}_k\|_2 - \|\bg_k-\hat{\bg}_k\|_2 \ge \alpha\varepsilon - \frac{L\sqrt p\,R^2}{2}\Delta_k.
\]
Since $\Delta_k\to0$ by Lemma~\ref{lem:radius_summability}, for all sufficiently large $k$ we have
\begin{equation}
\label{eq:g_lower_good}
\|\bg_k\|_2 \ge \frac{\alpha\varepsilon}{2}.
\end{equation}
Proposition~\ref{prop:general_descent} then ensures that, on $G_k$ and for all sufficiently large $k$,
\[
\rho_k\ge\eta \qquad \text{and} \qquad \|\bg_k\|_2\ge\kappa\Delta_k.
\]
Consequently, every occurrence of $G_k$ for sufficiently large $k$ produces $\Delta_{k+1}=\gamma_2\Delta_k$, while on the remaining iterations $\Delta_{k+1}\ge\gamma_1\Delta_k$.

Let $I_k \coloneqq \mathbf{1}_{G_k}$. For all sufficiently large $k$,
\begin{equation}
\label{eq:log_radius_increment}
\log\Delta_{k+1}-\log\Delta_k \ge I_k\log\gamma_2 + (1-I_k)\log\gamma_1.
\end{equation}
Since $\mathbb E[I_k\mid\mathcal F_k] \ge q$, the strong law of large numbers for bounded martingale differences implies
\[
\liminf_{N\to\infty} \frac{1}{N} \sum_{k=K}^{K+N-1} I_k \ge q \qquad \text{almost surely}.
\]
Averaging \eqref{eq:log_radius_increment} over $N$ steps therefore yields
\[
\liminf_{N\to\infty} \frac{\log\Delta_{K+N}-\log\Delta_K}{N} \ge q\log\gamma_2 + (1-q)\log\gamma_1 > 0 \qquad \text{almost surely},
\]
where the strict inequality follows from \eqref{eq:q_condition}. This implies $\Delta_k \to \infty$, contradicting $\Delta_k\to0$ from Lemma~\ref{lem:radius_summability}. Thus \eqref{eq:gradient_lower_contradiction} cannot hold, establishing \eqref{eq:liminf_gradient}.
\end{proof}

To strengthen the convergence guarantee from subsequential convergence to the convergence of the full gradient sequence, fix any $\varepsilon>0$ and define the index set of large gradients:
\begin{equation}
\mathcal K_\varepsilon \coloneqq \left\{ k : \|\nabla f(\bx_k)\|_2 > \varepsilon \right\}.
\end{equation}

\begin{lemma}
\label{lem:radius_sum_large_gradient}
Under the assumptions of Theorem~\ref{thm:liminf_convergence}, for every $\varepsilon>0$,
\begin{equation}
\label{eq:radius_sum_large_gradient}
\sum_{k\in\mathcal K_\varepsilon}\Delta_k < \infty \qquad \text{almost surely}.
\end{equation}
\end{lemma}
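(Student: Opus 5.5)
The plan is to split $\mathcal K_\varepsilon$ into good-frame and bad-frame iterations. On the good-frame iterations the safeguarded update expands the radius, so the objective decreases by an amount linear in $\Delta_k$. On the bad-frame iterations we control the radii through the multiplicative dynamics of $\Delta_k$ together with a martingale argument.

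Take $G_k$ and $I_k=\mathbf 1_{G_k}$ exactly as in the proof of Theorem~\ref{thm:liminf_convergence}, with constants $\alpha,R$ from Lemma~\ref{lem:gaussian_capture}. Fix $q$ satisfying \eqref{eq:q_condition}. For $k\in\mathcal K_\varepsilon$ on $G_k$, Lemma~\ref{lem:fd_general} gives
\[
\|\bg_k\|_2\ge \alpha\varepsilon-\tfrac{L\sqrt p R^2}{2}\Delta_k .
\]
By Lemma~\ref{lem:radius_summability}, $\Delta_k\to0$, so for $k\ge K$ (with $K$ random but finite a.s.) we get $\|\bg_k\|_2\ge\alpha\varepsilon/2$. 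Proposition~\ref{prop:general_descent} then yields $\rho_k\ge\eta$, and also $\|\bg_k\|_2\ge\kappa\Delta_k$. Hence $k\in\mathcal S$, the step is accepted, and
\[
f(\bx_k)-f(\bx_{k+1})\ge \eta\Delta_k\|\bg_k\|_2\ge \eta\alpha\varepsilon\Delta_k/2 .
\]
Since the accepted steps make $f(\bx_k)$ monotonically nonincreasing and $f$ is bounded below, summing gives
\[
\sum_{k\in\mathcal K_\varepsilon,\,k\ge K} I_k\Delta_k\le \frac{2(f(\bx_0)-f_{\mathrm{low}})}{\eta\alpha\varepsilon}<\infty .
\]

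It remains to bound $\sum_{k\in\mathcal K_\varepsilon}(1-I_k)\Delta_k$. Write $(1-I_k)=(1-q_k)+(q_k-I_k)$ with $q_k=\mathbb P(G_k\mid\mathcal F_k)\ge q$. Note that $\Delta_k$ and the event $\{k\in\mathcal K_\varepsilon\}$ are both $\mathcal F_k$-measurable, since $\bx_k$ is determined before the fresh directions are drawn.

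The martingale piece
\[
M_N=\sum_{k\le N}\mathbf 1_{\mathcal K_\varepsilon}(k)\,\Delta_k\,(q_k-I_k)
\]
has increments bounded by $\Delta_k$, and its quadratic variation is bounded by $\sum\Delta_k^2<\infty$ by Lemma~\ref{lem:radius_summability}. Hence $M_N$ converges a.s.

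For the drift piece we use $1-q_k\le (1-q)\le \tfrac{1-q}{q}q_k$. Thus
\[
\sum_{\mathcal K_\varepsilon}(1-q_k)\Delta_k\le c\sum_{\mathcal K_\varepsilon}q_k\Delta_k
= c\sum_{\mathcal K_\varepsilon}I_k\Delta_k + c\sum_{\mathcal K_\varepsilon}(q_k-I_k)\Delta_k ,
\]
and both sums on the right converge a.s. by the two previous steps. Combining the pieces, $\sum_{\mathcal K_\varepsilon}(1-I_k)\Delta_k$ is finite a.s. Adding back the finitely many indices $k<K$ proves \eqref{eq:radius_sum_large_gradient}.

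The main obstacle is the random threshold $K$. The good-frame implication only holds once $\Delta_k$ is small enough, and that is a random time. The step $I_k\Rightarrow k\in\mathcal S$ is therefore only valid on $\{k\ge K\}$, while the martingale argument should run over all $k$. I would handle this in one of two ways.

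\begin{itemize}
\item Replace $I_k$ by $\tilde I_k=I_k\mathbf 1\{\Delta_k\le\delta\}$ for a deterministic $\delta$ with $\tfrac{L\sqrt pR^2}{2}\delta\le\alpha\varepsilon/2$. This $\tilde I_k$ is adapted, and the indicator $\mathbf 1\{\Delta_k>\delta\}$ is nonzero only finitely often because $\Delta_k\to0$. The estimates above then go through with $q_k$ replaced by $q_k\mathbf 1\{\Delta_k\le\delta\}$.
\item Alternatively, use the conditional Borel--Cantelli/Lévy extension, which gives $\sum q_k\Delta_k<\infty$ iff $\sum I_k\Delta_k<\infty$ on $\{\sum\Delta_k^2<\infty\}$. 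This yields the same conclusion directly.
\end{itemize}

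In either case the ratio bound requires $\Delta_k\le \alpha\varepsilon(1-\eta)/(2L(\sqrt pR^2+pR^2))$, and the safeguard requires $\kappa\Delta_k\le\alpha\varepsilon/2$. So $\delta$ is chosen as the minimum of these thresholds and the finite-difference threshold above.
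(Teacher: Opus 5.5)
Your proof is correct, and it follows a genuinely different route from the paper's at the decisive step.

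\textbf{Shared first step.} The first half matches the paper. On $G_k\cap\{k\in\mathcal K_\varepsilon\}$ with $\Delta_k$ small, the safeguarded expansion is triggered and the step is accepted. Then $f$ drops by at least $\eta\alpha\varepsilon\Delta_k/2$, so $\sum_{\mathcal K_\varepsilon} I_k\Delta_k<\infty$.

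\textbf{Where you differ.} The difference is in passing from this good-frame sum to the full sum over $\mathcal K_\varepsilon$. The paper only invokes a ``standard renewal property for probabilistically accurate models'' with $q>1/2$. It does not state that property or check its hypotheses along the random index set $\mathcal K_\varepsilon$. You instead prove the step directly. Because $\Delta_k$ and $\mathbf 1\{k\in\mathcal K_\varepsilon\}$ are $\mathcal F_k$-measurable, $\sum_k \mathbf 1_{\mathcal K_\varepsilon}(k)\Delta_k(q_k-I_k)$ is a martingale. Its conditional variances are dominated by $\sum_k\Delta_k^2$. The proof of Lemma~\ref{lem:radius_summability} in fact bounds that sum by a deterministic constant, so the martingale is even $L^2$-bounded.

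\textbf{What your route buys.} Your argument is self-contained and makes the probabilistic mechanism explicit. It also works for any $q\in(0,1)$, so the restriction $q>1/2$ is unnecessary for this lemma. The paper's version is shorter but rests on an external result that is never pinned down.

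\textbf{Two simplifications.}
\begin{itemize}
\item You can skip the drift decomposition. Since $q_k\ge q$ on $\mathcal K_\varepsilon$,
\[
\sum_{\mathcal K_\varepsilon}\Delta_k\le q^{-1}\sum_{\mathcal K_\varepsilon}q_k\Delta_k=q^{-1}\Bigl(\sum_{\mathcal K_\varepsilon}I_k\Delta_k+\lim_{N\to\infty}M_N\Bigr).
\]
\item The random threshold $K$ is not a real obstacle. The martingale piece never uses the implication $I_k=1\Rightarrow k\in\mathcal S$; only $\sum I_k\Delta_k$ does. The finitely many terms with $k<K$ are harmless, and $K<\infty$ on every sample path because $\Delta_k\to0$ pathwise.
\end{itemize}
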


\begin{proof}
Choose the good-frame event $G_k$ as in the proof of Theorem~\ref{thm:liminf_convergence} with conditional probability $q > 1/2$. Because $\Delta_k\to0$, whenever $k\in\mathcal K_\varepsilon\cap G_k$ and $k$ is sufficiently large, we have $\|\bg_k\|_2 \ge \frac{\alpha\varepsilon}{2}$ and $\rho_k\ge\eta$. Consequently,
\[
f(\bx_k)-f(\bx_{k+1}) \ge \frac{\eta\alpha\varepsilon}{2}\Delta_k.
\]
Because $f$ is bounded below, summing this decrease over all such iterations gives
\[
\sum_{k\in\mathcal K_\varepsilon\cap G_k}\Delta_k < \infty.
\]
Since the conditioning events $G_k$ satisfy $\mathbb P(G_k\mid\mathcal F_k)\ge q > 1/2$ uniformly, applying the standard renewal property for probabilistically accurate models along the index subsequence $\mathcal K_\varepsilon$ implies that the sum over the entire set $\mathcal K_\varepsilon$ is finite:
\[
\sum_{k\in\mathcal K_\varepsilon}\Delta_k < \infty \qquad \text{almost surely}.
\]
\end{proof}

Let $\beta_k \coloneqq \|\bm D_k\|_2$. For the Gaussian frame construction in Algorithm~\ref{alg:mpsub}, the spectral norm satisfies
\[
\sup_k \mathbb E[\beta_k\mid\mathcal F_k] < \infty \qquad \text{and} \qquad \sup_k \mathbb E[\beta_k^2\mid\mathcal F_k] < \infty.
\]
Together with Lemmas~\ref{lem:radius_summability} and~\ref{lem:radius_sum_large_gradient}, these moment bounds imply
\begin{equation}
\label{eq:actual_step_decay}
\lim_{k\to\infty} \beta_k\Delta_k = 0 \qquad \text{almost surely},
\end{equation}
and, for every $\varepsilon>0$,
\begin{equation}
\label{eq:actual_step_sum}
\sum_{k\in\mathcal K_\varepsilon} \beta_k\Delta_k < \infty \qquad \text{almost surely}.
\end{equation}

\begin{theorem}[Global convergence]
\label{thm:global_convergence}
Under the assumptions of Theorem~\ref{thm:liminf_convergence},
\begin{equation}
\label{eq:gradient_convergence}
\lim_{k\to\infty} \|\nabla f(\bx_k)\|_2 = 0 \qquad \text{almost surely}.
\end{equation}
\end{theorem}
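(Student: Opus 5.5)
We argue by contradiction with the standard ``crossing'' argument from trust-region convergence theory, using Theorem~\ref{thm:liminf_convergence} for the liminf and \eqref{eq:actual_step_sum} to control how far the iterates can travel while the gradient is large. Work on the almost-sure event on which the conclusions of Theorem~\ref{thm:liminf_convergence}, Lemma~\ref{lem:radius_sum_large_gradient}, \eqref{eq:actual_step_decay} and \eqref{eq:actual_step_sum} hold simultaneously for every rational $\varepsilon>0$. This is a countable intersection of probability-one events, so it has probability one. Suppose, on this event, that $\limsup_k \|\nabla f(\bx_k)\|_2 > 0$. Then there is a rational $\varepsilon>0$ such that $\|\nabla f(\bx_k)\|_2 > 2\varepsilon$ for infinitely many $k$.

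Combining this with $\liminf_k\|\nabla f(\bx_k)\|_2=0$, we build infinitely many disjoint index blocks $\{t_j,\ldots,\ell_j-1\}$ with the following properties:
\begin{itemize}
\item $\|\nabla f(\bx_{t_j})\|_2>2\varepsilon$;
\item $\|\nabla f(\bx_k)\|_2>\varepsilon$ for $t_j\le k<\ell_j$;
\item $\|\nabla f(\bx_{\ell_j})\|_2\le\varepsilon$.
\end{itemize}
Each block therefore lies in $\mathcal K_\varepsilon$.

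Along a block, the iterate moves only on accepted steps, and each step has length $\|\bm D_k\bs_k\|_2\le\beta_k\Delta_k$ by the computation in Proposition~\ref{prop:general_descent}. Rejected steps and iterations with $\bg_k=\bm 0$ contribute zero movement. Hence
\[
\|\bx_{\ell_j}-\bx_{t_j}\|_2 \le \sum_{k=t_j}^{\ell_j-1}\beta_k\Delta_k \le \sum_{k\in\mathcal K_\varepsilon,\,k\ge t_j}\beta_k\Delta_k .
\]
By \eqref{eq:actual_step_sum}, this tail sum tends to $0$ as $j\to\infty$.

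Lipschitz continuity of $\nabla f$ from Assumption~\ref{ass:smooth} then gives
\[
\varepsilon < \|\nabla f(\bx_{t_j})\|_2-\|\nabla f(\bx_{\ell_j})\|_2 \le L\|\bx_{\ell_j}-\bx_{t_j}\|_2 \to 0,
\]
which is a contradiction. Therefore $\lim_k\|\nabla f(\bx_k)\|_2=0$ on this probability-one event, which proves \eqref{eq:gradient_convergence}.

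The main obstacle is the step bound combined with \eqref{eq:actual_step_sum}. The summability over $\mathcal K_\varepsilon$ of the \emph{actual} displacement $\beta_k\Delta_k$, rather than of $\Delta_k$ alone, is exactly what is needed. That summability comes from Lemma~\ref{lem:radius_sum_large_gradient} together with the uniform conditional moment bounds on $\beta_k$. Once that is granted, the rest is deterministic bookkeeping.

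Two details need care. First, the momentum column has unit norm and the Gaussian columns are unnormalized; this is harmless because only $\beta_k=\|\bm D_k\|_2$ enters the bound. Second, the crossing blocks must be chosen so that they use indices from $\mathcal K_\varepsilon$ exclusively. Otherwise the tail of \eqref{eq:actual_step_sum} would not control the displacement across a block.
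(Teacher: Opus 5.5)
Your proof is correct and follows the paper's crossing argument: Lipschitz continuity of $\nabla f$ is applied across blocks on which the gradient stays above $\varepsilon$, and the displacement is controlled by the tail of \eqref{eq:actual_step_sum}. The differences are minor improvements. You use down-crossings starting at $\|\nabla f(\bx_{t_j})\|_2>2\varepsilon$, so every block index lies in $\mathcal K_\varepsilon$ and \eqref{eq:actual_step_decay} is not needed; the paper's up-crossing pairs start at $a_j\notin\mathcal K_\varepsilon$ and must bound $\beta_{a_j}\Delta_{a_j}$ separately. Your countable intersection over rational $\varepsilon$ also makes explicit the sample-path dependence of $\varepsilon$, which the paper leaves implicit.
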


\begin{proof}
Suppose, for contradiction, that \eqref{eq:gradient_convergence} does not hold. In view of Theorem~\ref{thm:liminf_convergence}, there then exists $\varepsilon>0$ and infinitely many disjoint pairs of indices $a_j < b_j$ such that
\[
\|\nabla f(\bx_{a_j})\|_2 \le \varepsilon, \qquad \|\nabla f(\bx_{b_j})\|_2 > 2\varepsilon,
\]
while
\[
\|\nabla f(\bx_k)\|_2 > \varepsilon, \qquad \forall\, a_j < k < b_j.
\]
By the Lipschitz continuity of the gradient (Assumption~\ref{ass:smooth}),
\[
\varepsilon < \|\nabla f(\bx_{b_j})\|_2 - \|\nabla f(\bx_{a_j})\|_2 \le L \sum_{k=a_j}^{b_j-1} \|\bx_{k+1}-\bx_k\|_2.
\]
Since $\|\bx_{k+1}-\bx_k\|_2 \le \|\bm D_k \bs_s\|_2 \le \beta_k\Delta_k$, we can partition the summation as
\[
\sum_{k=a_j}^{b_j-1} \|\bx_{k+1}-\bx_k\|_2 \le \beta_{a_j}\Delta_{a_j} + \sum_{k=a_j+1}^{b_j-1} \beta_k\Delta_k.
\]
As $j\to\infty$, the first term $\beta_{a_j}\Delta_{a_j} \to 0$ almost surely by \eqref{eq:actual_step_decay}. Furthermore, all intermediate indices $k \in \{a_j+1, \ldots, b_j-1\}$ belong to $\mathcal K_\varepsilon$. Because $\sum_{k\in\mathcal K_\varepsilon} \beta_k\Delta_k < \infty$ almost surely by \eqref{eq:actual_step_sum}, the tail sum $\sum_{k=a_j+1}^{b_j-1} \beta_k\Delta_k$ tends to zero as $j\to\infty$. Consequently, the entire right-hand side converges to zero, which contradicts the strict lower bound $\varepsilon > 0$.

Therefore, \eqref{eq:gradient_convergence} holds almost surely.
\end{proof}

Theorem~\ref{thm:global_convergence} establishes almost-sure first-order global convergence for a fixed deterministic objective. In our LLM fine-tuning experiments, minibatches change across iterations, so asymptotic convergence to a stationary point of the population loss is not claimed.
\section{Numerical Experiments}
\label{sec:experiments}

We evaluate \texttt{MpSub} on full-parameter fine-tuning of causal language models, comparing against the zeroth-order optimizer MeZO \cite{malladi2023mezo} under matched computational budgets. We also examine step-size sensitivity and the effect of the subspace dimension.

\subsection{Experimental setup}
\label{subsec:setup}

Experiments are conducted on CommitmentBank (CB) from the SuperGLUE benchmark, a three-class natural language inference task. Following \cite{malladi2023mezo}, we use 100 training examples, 50 validation examples, and the 56-example validation split as the test set. We evaluate two autoregressive models from the OPT family: OPT-125M ($n \approx 1.25 \times 10^8$ parameters) and OPT-350M ($n \approx 3.5 \times 10^8$ parameters). All parameters are stored and optimized in FP32 with a batch size of 8. Results are averaged over three random seeds, with min--max ranges reported.

Because zeroth-order methods do not compute gradients, computational cost is measured by the total number of forward passes on the training objective. Each method is allotted a budget of $8{,}400$ forward passes. For \texttt{MpSub} with $p = 20$, which takes 200 optimization steps, each iteration requires $2p + 2 = 42$ evaluations. MeZO uses 2 evaluations per step, therefore taking $4{,}200$ steps. Periodic development evaluations are recorded separately and excluded from this budget.

\texttt{MpSub} uses the same fixed hyperparameters across both models: $p = 20$, initial radius $\Delta_0 = 10^{-1}$, contraction factor $\gamma_1 = 0.5$, expansion factor $\gamma_2 = 2.0$, success threshold $\eta = 0.1$, $\Delta_{\min} = 10^{-12}$, and $\Delta_{\max} = 1.0$. For MeZO, the perturbation scale is $\epsilon = 10^{-3}$, and the learning rate $\eta_{\mathrm{M}}$ is selected by development accuracy over a grid: $\{10^{-8}, 10^{-7}, 10^{-6}, 10^{-5}, 10^{-4}\}$ on OPT-125M and $\{10^{-7}, 10^{-6}, 10^{-5}\}$ on OPT-350M.

\subsection{Matched-budget comparison}
\label{subsec:main}

Table~\ref{tab:main} reports test and development accuracy across three random seeds under the common budget of $8{,}400$ forward passes.

\begin{table}[htb]
\centering
\caption{Matched-budget fine-tuning results on CommitmentBank ($8{,}400$ forward passes). Values are means over three random seeds, with min--max ranges in parentheses. The MeZO learning rate is selected by mean development accuracy.}
\label{tab:main}
\small
\begin{tabular}{@{}llccc@{}}
\toprule
Model & Method & Dev loss & Dev acc. & Test acc. \\
\midrule
OPT-125M & \texttt{MpSub} ($\Delta_0=10^{-1}$) & \shortstack{$0.674$ \\ {\scriptsize($0.620$--$0.723$)}} & \shortstack{$0.773$ \\ {\scriptsize($0.740$--$0.820$)}} & \shortstack{$0.673$ \\ {\scriptsize($0.661$--$0.679$)}} \\
         & MeZO ($\eta_{\mathrm M}=10^{-6}$) & \shortstack{$0.703$ \\ {\scriptsize($0.625$--$0.784$)}} & \shortstack{$0.760$ \\ {\scriptsize($0.740$--$0.800$)}} & \shortstack{$0.685$ \\ {\scriptsize($0.679$--$0.696$)}} \\
\midrule
OPT-350M & \texttt{MpSub} ($\Delta_0=10^{-1}$) & \shortstack{$0.738$ \\ {\scriptsize($0.715$--$0.774$)}} & \shortstack{$0.787$ \\ {\scriptsize($0.760$--$0.820$)}} & \shortstack{$0.690$ \\ {\scriptsize($0.679$--$0.714$)}} \\
         & MeZO ($\eta_{\mathrm M}=10^{-6}$) & \shortstack{$0.679$ \\ {\scriptsize($0.582$--$0.743$)}} & \shortstack{$0.780$ \\ {\scriptsize($0.720$--$0.880$)}} & \shortstack{$0.685$ \\ {\scriptsize($0.679$--$0.696$)}} \\
\bottomrule
\end{tabular}
\end{table}

On OPT-125M, \texttt{MpSub} with the default radius $\Delta_0 = 10^{-1}$ achieves a mean development accuracy of $0.773$ and a development loss of $0.674$, compared to $0.760$ and $0.703$ for MeZO, while test accuracies are similar ($0.673$ versus $0.685$). On OPT-350M, \texttt{MpSub} achieves a mean test accuracy of $0.690$ and development accuracy of $0.787$, compared to $0.685$ and $0.780$ for MeZO. \texttt{MpSub} attains these results with the same preset parameters on both models.

\begin{figure}[htb]
\centering
\includegraphics[width=0.98\textwidth]{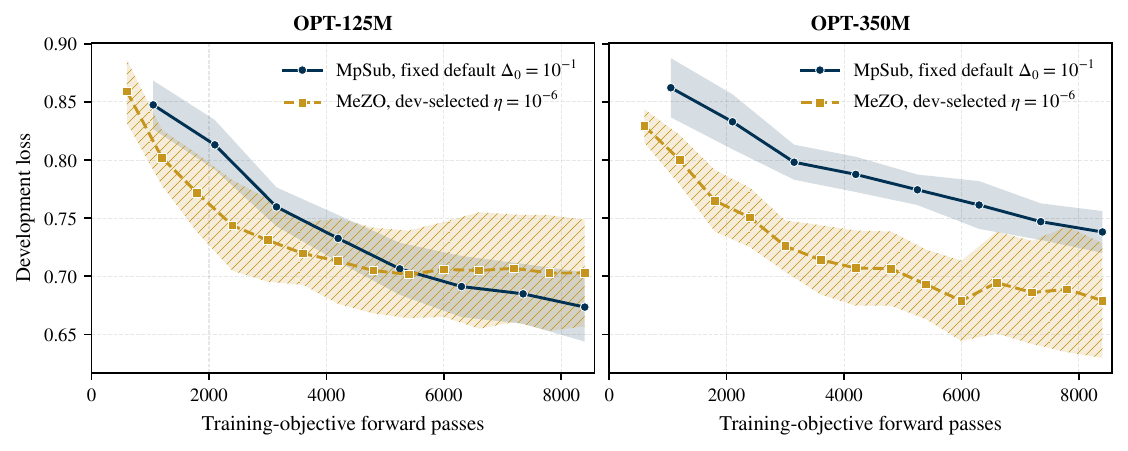}
\caption{Development loss versus training forward passes for OPT-125M (left) and OPT-350M (right). Shaded bands represent $\pm 1$ standard error of the mean across three random seeds.}
\label{fig:convergence}
\end{figure}

Figure~\ref{fig:convergence} shows development loss as a function of forward passes. Though early in training MeZO decreases the loss rapidly, its progress gradually becomes slower. In contrast, \texttt{MpSub} uses the momentum direction together with $p-1$ random search directions. This leads to a steady decrease in development loss and a lower final development loss on OPT-125M.

\subsection{Step-size sensitivity}
\label{subsec:sensitivity}

Figure~\ref{fig:sensitivity} compares the sensitivity of MeZO across learning rates against \texttt{MpSub} across initial radii $\Delta_0 \in \{10^{-3}, 10^{-2}, 3\times 10^{-2}, 10^{-1}, 3\times 10^{-1}\}$ on OPT-125M.

\begin{figure}[htb]
\centering
\includegraphics[width=0.98\textwidth]{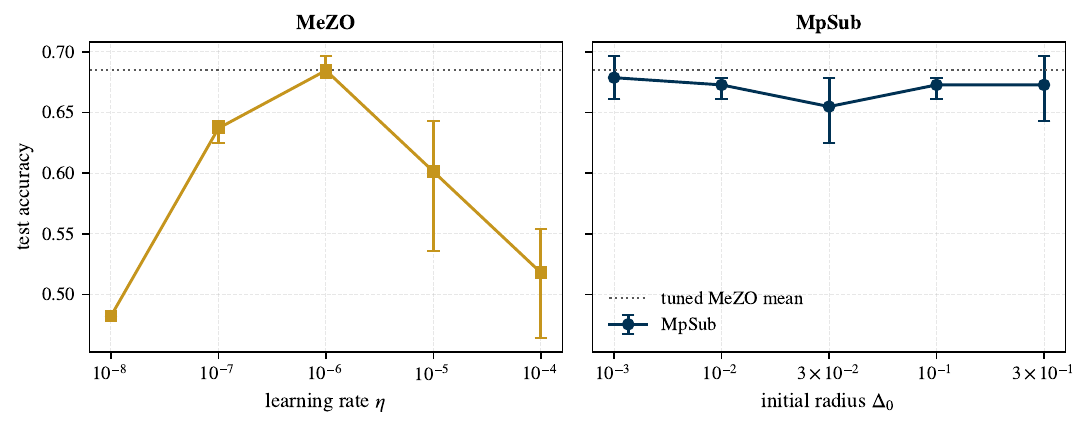}
\caption{Step-size sensitivity on OPT-125M under $8{,}400$ forward passes. Left: MeZO test accuracy across learning rates $\eta_{\mathrm{M}}$. Right: \texttt{MpSub} test accuracy across initial radii $\Delta_0$.}
\label{fig:sensitivity}
\end{figure}

MeZO depends strongly on the learning rate: shifting $\eta_{\mathrm{M}}$ by one order of magnitude from the best rate ($10^{-6}$) drops mean test accuracy by $0.048$ at $10^{-7}$ and by $0.084$ at $10^{-5}$, while rates outside this range yield poor results. Finding an effective learning rate for MeZO thus required evaluating multiple configurations ($42{,}000$ forward passes across five rates on OPT-125M, and $25{,}200$ passes across three rates on OPT-350M).

By contrast, \texttt{MpSub} maintains a stable performance plateau across the entire tested initial-radius range: mean test accuracy consistently hovers between $0.655$ and $0.678$ across all five values of $\Delta_0$ from $10^{-3}$ to $3\times 10^{-1}$ (a $300$-fold span). This robustness illustrates the self-calibrating nature of the trust-region radius: even when initialized conservatively at $\Delta_0 = 10^{-3}$, successful trial steps ($\rho_k \ge \eta$) prompt the expansion factor $\gamma_2 = 2.0$ to double $\Delta_k$ iteratively during the opening iterations, swiftly navigating the step size toward the effective operating scale. As a result, the default setting $\Delta_0 = 10^{-1}$ delivers competitive performance on both OPT-125M and OPT-350M without an initial parameter sweep.

\subsection{Effect of the subspace dimension}
\label{subsec:p_ablation}

Table~\ref{tab:p_ablation} examines the effect of varying the subspace dimension $p \in \{5, 10, 15, 20, 25, 30\}$ on OPT-125M over 100 iterations.

\begin{table}[htb]
\centering
\caption{Ablation on subspace dimension $p$ for OPT-125M (CommitmentBank, 100 steps, $\Delta_0 = 10^{-1}$, single seed). Forward passes per step equal $2p+2$; run times measured on an NVIDIA RTX 4060 Ti GPU.}
\label{tab:p_ablation}
\small
\begin{tabular}{ccccc}
\toprule
$p$ & Forward passes / step & Dev acc. & Test acc. & Time / step (s) \\
\midrule
5  & 12 & 0.70 & 0.643 & 3.0 \\
10 & 22 & 0.72 & 0.661 & 4.4 \\
15 & 32 & 0.74 & 0.661 & 6.0 \\
20 & 42 & 0.78 & 0.679 & 7.7 \\
25 & 52 & 0.78 & 0.679 & 8.7 \\
30 & 62 & 0.76 & 0.661 & 10.1 \\
\bottomrule
\end{tabular}
\end{table}

As $p$ increases from 5 to 20, development accuracy rises from $0.70$ to $0.78$ and test accuracy from $0.643$ to $0.679$, which is consistent with the increase in captured gradient energy described in Lemma~\ref{lem:capture}. Beyond $p = 20$, the accuracy no longer improves in this experiment, while both the number of forward passes per step and the measured per-step latency continue to increase. In particular, the latency rises from $3.0$ seconds at $p = 5$ to $10.1$ seconds at $p = 30$. These results suggest that $p = 20$ provides a reasonable balance between directional information and computational cost for the experiments considered here.

\section{Conclusion}
\label{sec:conclusion}

We proposed \texttt{MpSub}, which combines a moving random subspace with trust-region step-size control for derivative-free LLM fine-tuning. For smooth deterministic objectives, we showed that the gradient norm converges to zero almost surely under Gaussian direction sampling and the safeguarded radius update. On CommitmentBank, \texttt{MpSub} achieved results comparable to tuned MeZO on OPT-125M and OPT-350M under the same forward-pass budget, while using the same trust-region settings for both models.

Several questions remain open. The convergence analysis treats a fixed deterministic objective, whereas the LLM experiments resample the minibatch between iterations. Extending the analysis to changing minibatches is therefore the most direct theoretical next step. It would also be useful to study whether curvature information can be introduced into the subspace model without returning to the quadratic interpolation cost that motivated the linear model used here.

The experiments are also limited to one dataset and two relatively small language models. Tests on larger models, additional datasets, and generative tasks are needed to understand how the method behaves at a broader scale. The $2p$ directional evaluations are independent once the minibatch and current iterate are fixed, so parallel evaluation is another direction worth studying. Finally, \texttt{MpSub} could be combined with parameter-efficient fine-tuning methods such as LoRA, where the same trust-region mechanism would operate in a much smaller trainable parameter space.

\section*{Acknowledgements}
The authors thank the developers of the MeZO and LOZO codebases, on which the experimental infrastructure builds.

\bibliographystyle{siamplain}
\bibliography{refs}

\end{document}